\documentclass[11pt,letterpaper]{article}
\usepackage[margin=1in]{geometry}
\usepackage[T1]{fontenc}
\usepackage[utf8]{inputenc}
\usepackage[hyphens]{url}
\usepackage{graphicx}
\usepackage{natbib}
\usepackage{caption}
\usepackage{booktabs}
\usepackage{amsmath,amssymb,amsthm}
\usepackage[hidelinks]{hyperref}
\newtheorem{theorem}{Theorem}
\newtheorem{definition}{Definition}
\newtheorem{corollary}{Corollary}
\DeclareMathOperator{\Alg}{Alg}

\title{Learned, Relied Upon, or Necessary?\\Separating Checkpoint Dependence from Task-Level Value in Sheaf GNNs}
\author{Yi Liu\\
\small MS Student\\
\small School of Astronomy and Space Science\\
\small University of Science and Technology of China\\
\small \texttt{scnuliuyi@mail.ustc.edu.cn}}
\date{}

\begin{document}
\maketitle

\begin{abstract}
Learned restriction maps in sheaf graph neural networks are often treated as
proof that the model has discovered useful edge geometry. That conclusion does
not follow from parameter movement or from a post-hoc ablation: both can show
how one checkpoint is organized while leaving open whether learned transport
still helps after the rest of the model adapts. We separate these claims with
two estimands. Checkpoint reliance intervenes on the maps of a fixed
predictor; protocol-relative replacement retrains matched families that
remove map capacity, edge variation, or persistent edge assignment. A
task-null theorem shows why the claims can diverge: labels identify only the
transported classifier directions, leaving $d^2-d$ invisible degrees of
freedom in every full $d\times d$ map. An exact frame model then gives the
boundary at which reliance becomes unreplaced task value. Label-only training 
realizes the predicted separation, while audits of public NSD, DNSD, and 
Directed Sheaf Neural Network (DSNN) implementations recover both replaceable 
and unreplaced transport regimes on real graphs. All five DNSD benchmarks exhibit fixed-checkpoint reliance. After retraining,
assignment-breaking or shared-map controls recover Full performance on four;
Roman-Empire retains a $.0675$ advantage over continually resampled assignment
and a $.0391$ advantage over a parameter-matched shared map across ten official
splits. Thus, a learned map can govern a fitted
computation without constituting indispensable edge geometry. Claims of
learned transport should pair checkpoint interventions with matched retraining.
\end{abstract}

\section{Introduction}

Sheaf graph neural networks attach vector spaces to nodes and learn linear
restriction maps on node--edge incidences. These maps transport neighboring
features into comparable coordinates, giving the model an appealing geometric
interpretation \citep{hansen2019spectral,bodnar2022neural}. The common evidence
for that interpretation is intuitive: maps move away from identity, and
replacing them in a trained network hurts prediction. Neither observation,
however, establishes that the task needs learned edge geometry.

\begin{figure}[t]
\centering
\includegraphics[width=\textwidth]{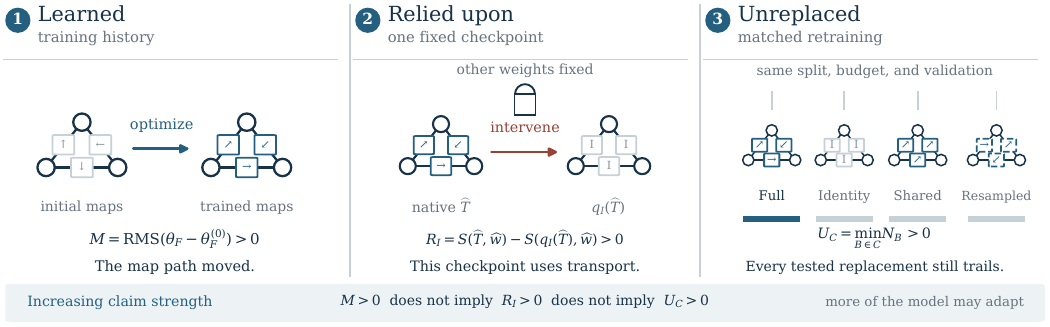}
\caption{Three claims require three counterfactuals. (1) Training history
compares the map path before and after optimization. (2) Checkpoint reliance
intervenes on transport while locking all other weights. (3) Unreplaced value
retrains matched families under the same protocol. The implications are not
reversible: movement need not create reliance, and reliance need not survive
retraining.}
\label{fig:claims}
\end{figure}

A concrete counterexample motivates this paper. On Minesweeper, replacing the
maps of a trained full DNSD checkpoint by identity removes $.371$ AUROC. Read
as a conventional ablation, this is strong evidence that transport is
important. Yet after retraining, an edge-independent shared-map family
\emph{exceeds} the full model. The checkpoint relies on its maps, but the task
does not reward their native edge assignment. Conversely, on Roman-Empire the
full model keeps a $.0675$ advantage even when a matched learner sees the same
map multiset under a fresh edge permutation on every forward pass. The two
datasets therefore answer different scientific questions despite both showing
large post-hoc intervention effects.

We make this distinction explicit. Checkpoint reliance asks whether one
fitted computation changes when its maps are intervened upon while all other
weights remain fixed. Protocol-relative replacement asks whether a
matched model family can recover the task after adaptation. The second claim
is necessarily indexed by a training budget and a control suite; it is
stronger than an ablation claim and closer to what is meant when learned maps
are presented as useful or necessary geometry. Matrix-valued edge operators
also require more than an identity baseline: capacity, edge variation, the map
distribution, and persistent edge assignment must be separated.

The Identity Sheaf Network (ISN) showed that identity maps replace learned
sheaf Laplacians on five benchmarks \citep{hernandez2026necessity}. Modern
operators now combine transport with polynomial filters, normalized adjacency,
gating, directed maps, and spectral objectives
\citep{borgi2026polynsd,bourgerie2026deep,fiorini2026reloaded,choi2026pacbayes}.
Our audit turns the resulting attribution problem into three distinct
evidential claims (Figure~\ref{fig:claims}): map movement concerns training
history; post-hoc intervention concerns one fitted checkpoint; paired
retraining against matched alternatives concerns model-family value.
Figure~\ref{fig:boundary} previews the theoretical boundary and the two
real-graph regimes recovered by the audit.

We contribute three results. First, we define an audit that separates map
movement, checkpoint reliance, and protocol-relative replacement. Fixed and
continually resampled assignments, layer-shared maps, and parameter-matched
channel adapters isolate what survives adaptation. Second, we derive a
$d^2-d$ task-null subspace and an exact frame boundary explaining when a
checkpoint can rely on transport that a retrained model absorbs. A 100-run
label-only experiment realizes both regimes without correspondence
supervision. Third, we audit public NSD, DNSD, and DSNN code. Across five
benchmarks and ten official splits, all full DNSD checkpoints use transport;
only Roman-Empire retains a persistent-assignment advantage over every control.

\section{Related Work}

\paragraph{Sheaf operators.}
Cellular sheaf Laplacians provide the spectral basis for sheaf message passing
\citep{hansen2019spectral}. Sheaf Neural Networks developed message passing
over prescribed cellular sheaves \citep{hansen2020sheaf}; Neural Sheaf
Diffusion (NSD) then introduced end-to-end learning of diagonal, orthogonal, or
unrestricted restriction maps coupled to trainable stalk and channel mixers
\citep{bodnar2022neural}. Connection Laplacian networks use prescribed orthogonal maps
\citep{barbero2022connection}; Bundle Neural Networks perform global
diffusion on flat bundles \citep{bamberger2025bundle}. Bayesian SNNs place
variational distributions over learned sheaves
\citep{gillespie2024bayesian}, while NLSD learns a data-adaptive nonlinear
sheaf Laplacian \citep{zaghen2024nonlinear}. PolyNSD learns a polynomial
spectral response with diagonal maps \citep{borgi2026polynsd}; DNSD replaces the Laplacian update
with normalized sheaf adjacency, odd activation, and gating
\citep{bourgerie2026deep}; SGPC combines transport-based lifting with spectral
calibration \citep{choi2026pacbayes}; and directed or cooperative models alter
the incidence structure \citep{fiorini2026reloaded,ribeiro2026cooperative}.
These methods couple transport with different propagation rules. Our controls
target the learned map path inside NSD and DNSD, then transfer the assignment
test to the official DSNN implementation.

Restriction maps encode known tangent-frame transport
\citep{battiloro2024tangent}. They can also be inferred from observed graph
signals by minimizing sheaf total variation
\citep{dinino2025restriction}. Joint-diffusion models use structured map
learners and an ellipsoid task to expose regimes hidden by standard benchmarks
\citep{caralt2024joint}. These works design or recover a transport operator;
the present study measures its role after training.

\paragraph{Necessity and attribution.}
ISN shows that identity maps replace learned sheaf Laplacians on its benchmark
family and that diffusion-limit behavior need not predict trained performance
\citep{hernandez2026necessity}. Quiver analysis characterizes oversmoothing as
degeneration toward low-complexity representation summands and proposes
stability regularization \citep{donmez2026degeneracy}. We study finite
checkpoints and retrained model families. The need to retrain after removing a
feature is established in interpretability evaluation
\citep{hooker2019benchmark}; model-class reliance likewise separates one
fitted model from a set of well-performing alternatives
\citep{fisher2019all}. We apply this logic to matrix-valued edge operators and
add controls for map capacity and edge assignment.

\begin{figure}[t]
\centering
\includegraphics[width=\textwidth]{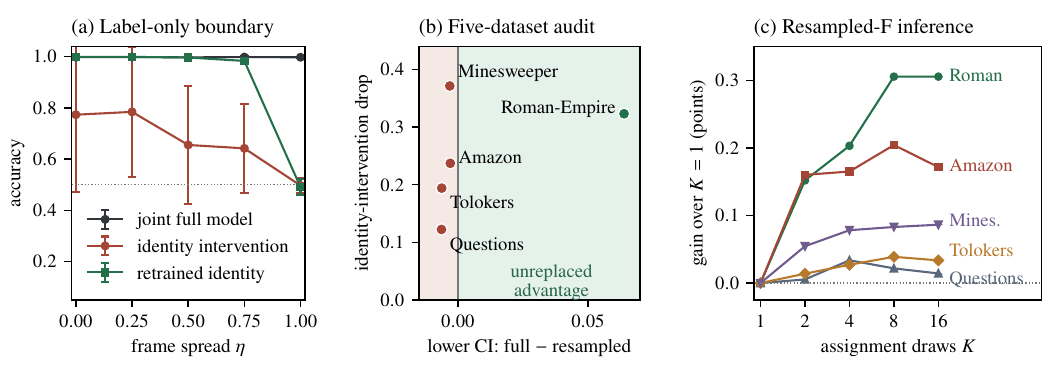}
\caption{Reliance and replacement separate. (a) On the frame task, identity
intervention can fail while retrained identity remains accurate; bars show one
standard deviation over 20 seeds. (b) All checkpoints rely on their maps, but
  only Roman-Empire has a positive lower confidence bound against Resampled-F.
  (c) Averaging up to 16 assignments changes each Resampled-F score by at most
  $.0031$.}
\label{fig:boundary}
\end{figure}

\section{A Claim Ladder for Learned Transport}

\subsection{Transport Actions}

For an undirected edge $e=\{u,v\}$, a sheaf layer assigns incidence maps
$F_{u\unlhd e},F_{v\unlhd e}\in\mathbb R^{d\times d}$. The transported
cross-edge action in both audited operators is
\begin{equation}
A_{u\to v}=F_{v\unlhd e}^{\top}F_{u\unlhd e}.
\label{eq:action}
\end{equation}
The $F$'s are learned incidence-level factors, whereas $A$ is the induced
cross-edge operator that enters message passing. Task scores observe an
incidence pair only through its induced action and the surrounding
normalization and channel mixing; distinct factor pairs can induce the same
$A$. We therefore measure movement on the map predictor, implement
interventions on paired incidence maps, and interpret reliance and replacement
as claims about the resulting transport path---not recovery of a unique
incidence-map factorization.

NSD places these maps in a normalized sheaf Laplacian; DNSD places the
cross-edge term in normalized sheaf adjacency. Identity intervention sets both
maps to $I_d$. Post-hoc shuffling applies a fixed layer-wise permutation to map
pairs while all weights remain fixed. Fixed-Shuffle-F trains with that same
permutation in every forward pass, allowing adaptation to a stable
reassignment. Resampled-Shuffle-F draws a fresh permutation on every forward
pass. It preserves the map learner, parameter count, and per-forward multiset
of paired maps while removing persistent correspondence at application time.
The resulting stochastic family also changes cross-edge gradients and the
states seen by later map learners. We report single-draw inference and multi-draw averaging separately.

Layer-Shared-F learns one source and one target incidence matrix per layer and
applies them to every edge. It retains a trainable full matrix action without
edge variation. Layer-Shared-C adds the node-wise adapter below until its
active parameter count matches Full-F. These controls distinguish persistent
edge assignment from a shared linear action and channel capacity.

The learned incidence families are diagonal matrices and unrestricted
$d\times d$ matrices; identity has no map predictor. The comparisons preserve
different parts of the computation:
\begin{center}
\footnotesize
\setlength{\tabcolsep}{3pt}
\begin{tabular}{@{}lll@{}}
\toprule
Comparison & Held fixed & Changed\\
\midrule
Map movement & architecture, data & initialization\\
Post-hoc map & all other weights & transport action\\
Resampled-F & learner, map multiset & persistent assignment\\
Shared-C & split, budget, size & edge variation\\
\bottomrule
\end{tabular}
\end{center}

\subsection{Two Estimands}

Let $T=(T_e)_{e\in E}$ denote the transport actions used by a fitted sheaf
network, $w$ all remaining weights, and $S(T,w)$ a held-out score. An
intervention $q$ replaces $T$ and preserves $w$. We use $q_I(T)=I$ and a
layer-wise fixed permutation $q_\pi(T)_e=T_{\pi(e)}$.

\begin{definition}[Checkpoint reliance]
For a fitted checkpoint $(\widehat T,\widehat w)$, its reliance on intervention
$q$ is
\begin{equation}
R_q=S(\widehat T,\widehat w)-S(q(\widehat T),\widehat w).
\label{eq:reliance}
\end{equation}
\end{definition}

$R_I$ measures dependence on the fitted transport path. $R_\pi$ is stricter:
it preserves the maps and their parameter count but destroys which edge
receives which map. A large $R_\pi$ identifies use of the fitted native
assignment and separates it from a global matrix distribution.

Let $\Alg(\mathcal A;r)$ train family $\mathcal A$ under protocol $r$, where
$r$ fixes the data split, initialization index, optimization budget, and
validation rule. For a learned family $\mathcal A$ and comparison family
$\mathcal B$, define
\begin{definition}[Protocol-relative replacement margin]
\begin{equation}
N_{\mathcal B}=\mathbb E_r\!\left[
S\bigl(\Alg(\mathcal A;r)\bigr)-S\bigl(\Alg(\mathcal B;r)\bigr)
\right].
\label{eq:necessity}
\end{equation}
\end{definition}

The expectation is over pre-specified paired splits and seeds. The control
suite contains identity, diagonal, a parameter-matched node adapter, fixed and
resampled shuffling, Layer-Shared-F, and Layer-Shared-C. The adapter assigns the
full map budget to channel transformations with no edge input. Fixed shuffling
measures adaptation to stable reindexing. Resampling tests persistent
correspondence under stochastic training. Layer sharing tests whether one
matrix pair per layer recovers the edge-conditioned model.

For a pre-specified control suite $\mathcal C$, define
\begin{equation}
U_{\mathcal C}=\min_{\mathcal B\in\mathcal C}N_{\mathcal B}.
\label{eq:suite}
\end{equation}
A positive $U_{\mathcal C}$ means that every family in the suite trails the
full model under protocol $r$.

Equations~\eqref{eq:reliance} and \eqref{eq:necessity} condition on different
objects. Reliance holds $w$ fixed; replacement adapts $w$. Large $R_q$ can
coexist with non-positive $N_{\mathcal B}$.

The separation already occurs in a linear model. Let a fitted predictor be
$h_{T,w}(x)=w^\top Tx$ for an invertible, input-independent $T$. Replacing $T$
by identity at fixed $w$ changes the logit by $w^\top(T-I)x$, which can be
arbitrarily large. After adaptation,
$h_{T,w}(x)=h_{I,T^\top w}(x)$ exactly. The chosen parameterization relies on
transport; the linear function class absorbs it into the head.
Edge-dependent maps cannot generally be absorbed by one head in this way;
the frame model below determines when their aggregate effect remains
absorbable.

\subsection{What Each Test Can Justify}

Let $M$ be the RMS change of map-predictor parameters from initialization.
Together, $M$, $R_I$, $R_\pi$, and the retraining margins form an evidence
ladder. Nonzero $M$ says only that optimization moved the maps. Positive $R_I$
shows that the fitted predictor uses the transport branch. Positive $R_\pi$
shows that the predictor additionally uses the native edge assignment. A gain
over a fixed-shuffled learner shows that this assignment remains useful after
adaptation to a stable reassignment; a gain over Resampled-Shuffle-F shows that
persistent correspondence remains useful when no stable reassignment is
available. Finally, positive $U_{\mathcal C}$ establishes an advantage
unreplaced by the complete pre-specified suite.

\begin{center}
\small
\begin{tabular}{@{}lll@{}}
\toprule
Evidence & Supported claim & Still unresolved\\
\midrule
$M>0$ & maps were learned & prediction use\\
$R_I>0$ & checkpoint uses maps & replaceability\\
$R_\pi>0$ & native assignment is used & adaptation\\
$N_{\rho}>0$ & assignment beats resampling & other controls\\
$U_{\mathcal C}>0$ & suite-unreplaced advantage & broader protocols\\
\bottomrule
\end{tabular}
\end{center}

\subsection{Label Supervision Constrains Effective Directions}

Consider the frame-aggregating classifier
\begin{equation}
f_{T,w}(X)=w^\top G^{-1}\sum_{g=1}^G T_gX_g,
\qquad q_g=T_g^\top w.
\label{eq:effective}
\end{equation}

\begin{theorem}[Task-null map subspace]
\label{thm:nullspace}
For unrestricted full matrices $T_g\in\mathbb R^{d\times d}$, the function in
Equation~\eqref{eq:effective} depends on $(T,w)$ only through
the effective vectors $(q_g)_{g=1}^G$. For any $w\ne0$ and target vectors
$q_1,\ldots,q_G$, every matrix
\begin{equation}
T_g=\frac{wq_g^\top}{\|w\|_2^2}+B_g,
\qquad B_g^\top w=0,
\label{eq:nullmaps}
\end{equation}
realizes $T_g^\top w=q_g$. Each full map has a task-null affine subspace of
dimension $d^2-d$ for fixed nonzero $w$.
\end{theorem}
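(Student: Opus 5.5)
The proof is linear algebra in three steps, taken in the order the statement lists them.

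\emph{Dependence only on the effective vectors.} Pull the scalar $w^\top$ inside the sum in Equation~\eqref{eq:effective}:
\[
f_{T,w}(X)=G^{-1}\sum_{g=1}^G (T_g^\top w)^\top X_g=G^{-1}\sum_{g=1}^G q_g^\top X_g .
\]
Thus two parameter pairs with the same $(q_g)_{g=1}^G$ define the same function on every input $X$.

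\emph{Realization.} Fix $w\neq0$ and arbitrary targets $q_g$. For $T_g$ as in Equation~\eqref{eq:nullmaps}, transposing gives
\[
T_g^\top w=\frac{q_g\,w^\top w}{\|w\|_2^2}+B_g^\top w=q_g ,
\]
using $B_g^\top w=0$. Conversely, if some $T_g$ satisfies $T_g^\top w=q_g$, set $B_g=T_g-wq_g^\top/\|w\|_2^2$. Then $B_g^\top w=q_g-q_g=0$, so every solution has the displayed form. The solution set is therefore exactly the affine space
\[
\frac{wq_g^\top}{\|w\|_2^2}+\{B\in\mathbb R^{d\times d}: B^\top w=0\}.
\]
This also shows that every target configuration is attainable, which is the content the later frame argument needs.

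\emph{Dimension count.} It remains to compute the dimension of the linear space $\{B: B^\top w=0\}$. Consider the map $L_w:B\mapsto B^\top w$ from $\mathbb R^{d\times d}$ to $\mathbb R^d$. It is linear. It is surjective because $L_w(wq^\top/\|w\|^2)=q$ for every $q$. Rank--nullity then gives
\[
\dim\ker L_w=d^2-d .
\]
As a direct check, $B^\top w=0$ says every column of $B$ lies in $w^\perp$, which has dimension $d-1$. With $d$ columns, this gives $d(d-1)$ free parameters.

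Hence each full map's fiber over a fixed $q_g$ is an affine subspace of dimension $d^2-d$. By the first step, the task function is constant on this fiber, so the subspace is task-null.

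The only step needing care is the converse in the realization step: showing the parametrization captures \emph{every} solution, so that the affine dimension is exact and not merely a lower bound. The rank--nullity surjectivity argument handles it.
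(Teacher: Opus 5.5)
Your proof is correct and takes the same route as the paper: substitute $q_g=T_g^\top w$ into Equation~\eqref{eq:effective}, verify $T_g^\top w=q_g$ by transposing, and count the $d$ conditions imposed by $B_g^\top w=0$. Your surjectivity and rank--nullity step and your converse (every solution has the displayed form) spell out the independence and the exact affine dimension that the paper's one-line count asserts without proof.
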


\begin{proof}
Substitution gives
$f_{T,w}(X)=G^{-1}\sum_g q_g^\top X_g$. Equation~\eqref{eq:nullmaps}
satisfies $T_g^\top w=q_g+B_g^\top w=q_g$. The constraint
$B_g^\top w=0$ imposes $d$ independent conditions on a $d^2$-dimensional
matrix.
\end{proof}

Label supervision can constrain transported classifier directions without
identifying full restriction maps. Map movement, checkpoint reliance, and
protocol-relative replacement can consequently disagree at population risk.

\section{An Exact Frame Boundary}

We construct a task with recoverable maps in every condition and changing
task value. Let $Y\in\{-1,1\}$ be balanced. For frame
$g\in\{0,\ldots,G-1\}$, observe
\begin{equation}
X_g=R_g(Ys e_1+\varepsilon_g),\qquad
\varepsilon_g\sim\mathcal N(0,\sigma^2I_d),
\label{eq:frame-data}
\end{equation}
independently across $g$. The signal-plane action of $R_g$ is a rotation by
$\theta_g=2\pi\eta g/G$; its action on the orthogonal complement can be any
orthogonal matrix. The oracle map $T_g^\star=R_g^\top$ is the standard
frame-alignment action used by connection and vector diffusion
\citep{singer2012vector,barbero2022connection}. Define
\begin{align}
z_G(\eta)&=G^{-1}\sum_{g=0}^{G-1}e^{i\theta_g},\nonumber\\
a_G(\eta)&=|z_G(\eta)|,\qquad
b_G(\eta)=\operatorname{Re}z_G(\eta).
\end{align}
For nonzero denominator,
\begin{equation}
z_G(\eta)=e^{i\pi\eta(G-1)/G}
\frac{\sin(\pi\eta)}{G\sin(\pi\eta/G)},
\label{eq:dirichlet}
\end{equation}
with endpoint values defined by continuity.

\begin{theorem}[Reliance--replacement boundary]
\label{thm:frame}
Under Equation~\eqref{eq:frame-data}, oracle-aligned averaging followed by its
Bayes linear classifier has accuracy
\begin{equation}
A_{\rm oracle}=\Phi\!\left(\frac{s\sqrt G}{\sigma}\right).
\end{equation}
Identity averaging followed by its retrained Bayes linear classifier has
accuracy
\begin{equation}
A_{I}^{\rm retrain}=\Phi\!\left(
\frac{s\sqrt G\,a_G(\eta)}{\sigma}\right).
\end{equation}
If the oracle classifier is held fixed and its maps are replaced by identity,
the accuracy is
\begin{equation}
A_{I}^{\rm fixed}=\Phi\!\left(
\frac{s\sqrt G\,b_G(\eta)}{\sigma}\right).
\end{equation}
At $\eta=0$ the recovered maps can be non-identity on nuisance
coordinates but are unnecessary for classification. At $\eta=1$, identity
averaging is at chance; oracle alignment preserves the full margin.
Between these endpoints, checkpoint reliance and protocol-relative replacement need not
agree.
\end{theorem}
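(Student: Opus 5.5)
The plan is to reduce each of the three procedures to a one-dimensional Gaussian discrimination problem and read off the accuracy as $\Phi$ of a signal-to-noise ratio.

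\emph{Oracle accuracy.} Write $Z=G^{-1}\sum_g T_gX_g$ for whichever maps are used. With the oracle maps $T_g^\star=R_g^\top$ and $R_g$ orthogonal, we get $T_g^\star X_g=Ys e_1+\varepsilon_g$. Hence
\[
Z=Yse_1+\bar\varepsilon,\qquad \bar\varepsilon\sim\mathcal N(0,\sigma^2 I_d/G).
\]
This is a symmetric two-class Gaussian problem with means $\pm se_1$ and isotropic covariance. The Bayes rule is $\operatorname{sign}(e_1^\top Z)$, with error $\Phi(-s\sqrt G/\sigma)$. This gives $A_{\rm oracle}$.

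\emph{Identity, retrained.} Now $Z=G^{-1}\sum_g X_g$. Since each $R_g$ is orthogonal, $R_g\varepsilon_g\sim\mathcal N(0,\sigma^2 I_d)$ independently across $g$, so the noise is again $\mathcal N(0,\sigma^2I_d/G)$. The mean is $Ys\,\mu$ with $\mu=G^{-1}\sum_g R_ge_1$. The key step is to identify $\mu$ inside the signal plane. Because $R_g$ acts on $\operatorname{span}(e_1,e_2)$ as rotation by $\theta_g$, we have $R_ge_1=(\cos\theta_g,\sin\theta_g,0,\dots)$. Identifying the plane with $\mathbb C$ gives $\mu\leftrightarrow z_G(\eta)$, so $\|\mu\|=a_G(\eta)$. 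The complement action never touches $e_1$, so it is irrelevant; I will state this invariance explicitly, since it is what allows the complement action to be arbitrary. The retrained Bayes linear classifier projects onto $\mu/\|\mu\|$, which yields the SNR $s\sqrt G\,a_G/\sigma$. When $a_G=0$ the means coincide and accuracy is $1/2=\Phi(0)$, so the formula still holds.

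\emph{Identity, fixed oracle head.} Keep the oracle direction $w=e_1$ and replace the maps by identity. The score is $e_1^\top Z$, which has mean $Ys\,e_1^\top\mu=Ys\,b_G(\eta)$ and variance $\sigma^2/G$. Therefore
\[
P\bigl(Y e_1^\top Z>0\bigr)=\Phi\!\left(\frac{s\sqrt G\,b_G(\eta)}{\sigma}\right),
\]
which also covers negative $b_G$, where accuracy falls below chance.

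\emph{Closed form and endpoints.} Equation~\eqref{eq:dirichlet} follows from the geometric sum $\sum_g r^g=(r^G-1)/(r-1)$ with $r=e^{2\pi i\eta/G}$, after factoring half-angle exponentials. At $\eta=0$, every $\theta_g=0$, so $a_G=b_G=1$ and all three accuracies coincide. The $R_g$ may still act nontrivially on the complement, so the maps are recovered but not needed; this is exactly the task-null phenomenon of Theorem~\ref{thm:nullspace}, applied with $w=e_1$. At $\eta=1$, $\sin(\pi)=0$ while the denominator $G\sin(\pi/G)$ is nonzero for $G\ge2$, so $z_G=0$ and the retrained identity model is at chance while the oracle keeps its margin.

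\emph{Divergence in between.} I will exhibit $\eta$ with $b_G<a_G$, so that retraining strictly improves on the fixed intervention. Since $a_G=|z_G|\ge\operatorname{Re}z_G=b_G$, with equality only when $\arg z_G=0$, it suffices to have $z_G\ne0$ with nonzero phase $\pi\eta(G-1)/G$. For example, take $\eta$ with $\pi\eta(G-1)/G=\pi/2$, provided $\eta\notin\mathbb Z$ so that $z_G\ne0$. Then $b_G=0$, so the fixed intervention is at chance, while $a_G>0$.

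The main obstacle is a careful treatment of the general-$d$ geometry: confirming that the averaged noise stays isotropic and that the nuisance action cannot leak signal. This reduces to orthogonality of each $R_g$ together with $R_ge_1$ lying in the signal plane.
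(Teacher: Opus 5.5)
Your proposal is correct and follows the paper's proof essentially step for step. Oracle alignment gives means $\pm s e_1$ with noise $\mathcal N(0,\sigma^2 I_d/G)$. Identity averaging gives mean norm $s\,a_G(\eta)$ with the same isotropic noise. The fixed head $e_1$ sees signed mean $s\,b_G(\eta)$. Equation~\eqref{eq:dirichlet} is the finite geometric sum. You also fill in details the paper leaves implicit: you derive isotropy from orthogonality of each $R_g$, and you give an explicit $\eta$ with $b_G(\eta)=0<a_G(\eta)$ for the divergence claim, which the paper only treats in Corollary~\ref{cor:gap}.
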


\begin{proof}
Oracle alignment gives
$G^{-1}\sum_gT_g^\star X_g=Ys e_1+\bar\varepsilon$, where
$\bar\varepsilon\sim\mathcal N(0,\sigma^2I_d/G)$. Identity averaging has mean
$YsG^{-1}\sum_gR_ge_1$, whose norm is $s a_G(\eta)$, with the same isotropic
noise covariance. These give the first two expressions. The oracle head uses
$e_1$, so after identity intervention its signed mean is
$s e_1^\top G^{-1}\sum_gR_ge_1=s b_G(\eta)$. The Gaussian margin yields the
third expression. Equation~\eqref{eq:dirichlet} is the finite geometric sum.
\end{proof}

\begin{corollary}[Intervention exceeds replacement]
\label{cor:gap}
Let $c=s\sqrt G/\sigma$. The population identity reliance and replacement
margins in Theorem~\ref{thm:frame} satisfy
\begin{align}
R_I-N_I
&=\Phi\!\left(ca_G(\eta)\right)
 -\Phi\!\left(cb_G(\eta)\right)\geq0.\label{eq:gap}
\end{align}
The gap is strict whenever $|z_G(\eta)|>\operatorname{Re}z_G(\eta)$.
\end{corollary}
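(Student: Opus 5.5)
The plan is to instantiate the two estimands of Definitions~\eqref{eq:reliance} and \eqref{eq:necessity} with the population accuracies of Theorem~\ref{thm:frame}, and then reduce the corollary to the monotonicity of $\Phi$.

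First, the reliance. Take the fitted checkpoint to be the oracle maps $T_g^\star$ together with the oracle Bayes head. Its score is $A_{\rm oracle}=\Phi(c)$. Intervening with $q_I$ at fixed head gives $A_I^{\rm fixed}=\Phi(cb_G(\eta))$. Hence
\[
R_I=\Phi(c)-\Phi\bigl(cb_G(\eta)\bigr).
\]

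Second, the replacement margin. In the population idealization the protocol $r$ returns the Bayes-optimal linear head for each family. The full family attains the oracle accuracy: the oracle maps are in the family, and no other linear-aggregation classifier can beat the Bayes rule for aligned data. I take this as the intended reading of Theorem~\ref{thm:frame}, with oracle accuracy standing for $S(\Alg(\mathcal A;r))$. The identity family attains $A_I^{\rm retrain}=\Phi(ca_G(\eta))$. Hence
\[
N_I=\Phi(c)-\Phi\bigl(ca_G(\eta)\bigr).
\]
Subtracting the two expressions cancels $\Phi(c)$ and yields exactly Equation~\eqref{eq:gap}.

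Third, the sign and strictness. For any complex number $z$ we have $\operatorname{Re}z\le|z|$, so $b_G(\eta)\le a_G(\eta)$. Since $c\ge0$ and $\Phi$ is nondecreasing, the difference is nonnegative. I assume $s>0$, which makes $c>0$; the case $s=0$ is degenerate. Then $\Phi$ is strictly increasing, and $|z_G|>\operatorname{Re}z_G$ gives $ca_G>cb_G$, which gives strict inequality.

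The main obstacle is the identification step in the second stage. One must justify that $S(\Alg(\mathcal A;r))$ equals the oracle accuracy and not something larger. I would argue this by noting that any maps followed by averaging and a linear head produce a Gaussian score, whose accuracy is $\Phi(\text{signed mean}/\text{std})$. By Cauchy--Schwarz across the independent frames, the ratio is at most $s\sqrt G/\sigma$. This is attained by $T_g^\star$, but only if the optimizer is assumed to reach the population optimum. I would state that assumption explicitly as part of the population idealization.
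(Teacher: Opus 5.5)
Your proposal is correct and follows the paper's proof: substitute the three accuracies from Theorem~\ref{thm:frame} so that the common oracle term $\Phi(c)$ cancels, then apply $\operatorname{Re}z_G(\eta)\le|z_G(\eta)|$ and the monotonicity of $\Phi$. Your extra care makes explicit what the paper's one-line proof leaves implicit: strictness needs $c>0$, and the intervened checkpoint must be the oracle factorization whose score equals the Full-family optimum $\Phi(c)$ (by Theorem~\ref{thm:nullspace}, other equally accurate factorizations would give a different post-intervention accuracy).
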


\begin{proof}
Substitute the three accuracies and use
$a_G(\eta)=|z_G(\eta)|\geq\operatorname{Re}z_G(\eta)=b_G(\eta)$ with the
monotonicity of $\Phi$.
\end{proof}

We verify the formulas with $G=8$, $d=6$, $s=2$, $\sigma=.75$, and
$\eta\in\{0,.25,.5,.75,1\}$. A correspondence-supervised calibration recovers
$(R_g^\top)_g$ with mean Frobenius error below $9\times10^{-5}$ and follows all
three accuracies in Theorem~\ref{thm:frame} over 20 seeds per value.

The main controlled experiment removes the correspondence target. A full map
table initialized at identity and a linear head are trained jointly from class
labels on 500 samples, selected on 250, and tested on 250. Another identity
head is retrained on the same split. The full model reaches $.997$--$.999$
across the path. At $\eta=0$, identity intervention reaches $.773$ while
retrained identity reaches $.999$: optimization selected a relied-upon
factorization of a frame-invariant task. At $\eta=.75$, the three scores are
$.998$, $.642$, and $.983$. At $\eta=1$, identity intervention and retraining
both reach chance, $.497$ and $.491$. The residual $B_g$ in
Equation~\eqref{eq:nullmaps} contains $.769$--$.792$ of learned map energy. For
$d=6$, an isotropic dimension-only baseline allocates
$(d^2-d)/d^2=5/6=.833$ of energy to the task-null subspace. The observed values
are below but close to this baseline: most learned matrix energy is invisible
to the fitted classifier direction even when transport becomes task-critical.
Label-only training thus realizes the predicted separation and closes it when
the identity frame average cancels.

\begin{table}[b]
\centering
\small
\setlength{\tabcolsep}{2.6pt}
\begin{tabular}{crrrrrr}
\toprule
$\eta$ & Full & Post-$I$ & Retrain-$I$ & $R_I$ & $N_I$ & Null energy\\
\midrule
$0$ & $.998$ & $.773$ & $.999$ & $.225$ & $-.001$ & $.788$\\
$.25$ & $.999$ & $.784$ & $.999$ & $.214$ & $-.001$ & $.792$\\
$.50$ & $.997$ & $.655$ & $.997$ & $.342$ & $.001$ & $.788$\\
$.75$ & $.998$ & $.642$ & $.983$ & $.356$ & $.015$ & $.784$\\
$1$ & $.998$ & $.497$ & $.491$ & $.501$ & $.507$ & $.769$\\
\bottomrule
\end{tabular}
\caption{Label-only frame results over 20 seeds. Post-$I$ intervenes on the
jointly trained checkpoint; Retrain-$I$ fits a new head to identity averaging.
Null energy is $\sum_g\|B_g\|_F^2/\sum_g\|T_g\|_F^2$.}
\label{tab:label-only}
\end{table}

\section{Real-Graph Transport Audit}

The real-graph audit asks one headline question: after the checkpoint is allowed
to adapt, does persistent edge assignment still buy anything? Table~\ref{tab:paired}
is the primary result. All 50 Full checkpoints use their maps, but Roman-Empire is
the only dataset whose Full-minus-Resampled-F interval excludes zero. The
remaining controls identify what the successful replacement preserves.

\begin{table}[t]
\centering
\small
\setlength{\tabcolsep}{2.0pt}
\resizebox{\textwidth}{!}{%
\begin{tabular}{lrrrrrrrrr}
\toprule
Dataset & Identity & Diag. & Full & Capacity-F & Fixed-F & Resamp.-F & Shared-F & Shared-C\\
\midrule
Roman-Empire & $.774(.006)$ & $.816(.008)$ & $\mathbf{.845(.005)}$ & $.795(.008)$ & $.719(.014)$ & $.777(.006)$ & $.779(.007)$ & $.805(.006)$\\
Amazon-Ratings & $.431(.006)$ & $.461(.005)$ & $\mathbf{.478(.009)}$ & $.422(.009)$ & $.454(.007)$ & $.476(.008)$ & $.446(.006)$ & $.435(.008)$\\
Questions & $.730(.015)$ & $.752(.014)$ & $.772(.010)$ & $.739(.012)$ & $\mathbf{.776(.010)}$ & $.775(.009)$ & $.729(.012)$ & $.741(.012)$\\
Tolokers & $.841(.009)$ & $.840(.006)$ & $.846(.006)$ & $.839(.008)$ & $.844(.009)$ & $\mathbf{.849(.008)}$ & $.840(.005)$ & $.842(.009)$\\
Minesweeper & $.945(.007)$ & $.938(.005)$ & $.935(.010)$ & $.945(.006)$ & $.905(.007)$ & $.926(.013)$ & $\mathbf{.948(.007)}$ & $.947(.006)$\\
\bottomrule
\end{tabular}%
}
\caption{Supporting absolute DNSD scores over ten official splits, reported
as mean (standard deviation). The claim-aligned primary comparison is the
paired-margin audit in Table~\ref{tab:paired}. Every family selects its learning
rate by validation score on each split.}
\label{tab:main}
\end{table}

\begin{table}[t]
\centering
\small
\setlength{\tabcolsep}{3.4pt}
\resizebox{\textwidth}{!}{%
\begin{tabular}{lccccc}
\toprule
Dataset & $\Delta_\rho$ [95\% CI] & $p_H$ & $\Delta_{L}$ [95\% CI]
& $\Delta_{LC}$ [95\% CI] & $R_I/R_\pi$\\
\midrule
Roman-Empire & $\mathbf{+.0675\ [+.0639,+.0711]}$ & $.010$ & $+.0655\ [+.0617,+.0694]$ & $+.0391\ [+.0351,+.0431]$ & $.323/.260$\\
Amazon-Ratings & $+.0024\ [-.0029,+.0076]$ & $.469$ & $+.0326\ [+.0255,+.0397]$ & $+.0428\ [+.0363,+.0492]$ & $.237/.125$\\
Questions & $-.0025\ [-.0063,+.0012]$ & $.469$ & $+.0426\ [+.0361,+.0491]$ & $+.0310\ [+.0246,+.0375]$ & $.122/.025$\\
Tolokers & $-.0027\ [-.0062,+.0008]$ & $.469$ & $+.0061\ [+.0020,+.0102]$ & $+.0043\ [+.0002,+.0084]$ & $.194/.094$\\
Minesweeper & $+.0089\ [-.0031,+.0209]$ & $.469$ & $-.0135\ [-.0196,-.0074]$ & $-.0118\ [-.0182,-.0055]$ & $.371/.110$\\
\bottomrule
\end{tabular}%
}
\caption{Primary real-graph audit. Full-map retraining margins against
Resampled-F ($\Delta_\rho$), Shared-F ($\Delta_L$), and parameter-matched
Shared-C ($\Delta_{LC}$) are paired across the ten official splits. The final
column reports fixed-checkpoint identity and assignment reliance. Roman-Empire
alone combines large reliance with a positive resampling-control margin.}
\label{tab:paired}
\end{table}

\begin{table}[t]
\centering
\small
\setlength{\tabcolsep}{3.2pt}
\begin{tabular}{lrrrr}
\toprule
Dataset & $M$ & $R_{\rm mean}$ & Flip & Logit $\ell_2$\\
\midrule
Roman-Empire & $.046$ & $.249$ & $.373$ & $10.307$\\
Amazon-Ratings & $.026$ & $.133$ & $.542$ & $3.164$\\
Questions & $.019$ & $.019$ & $.003$ & $.374$\\
Tolokers & $.029$ & $.090$ & $.132$ & $.996$\\
Minesweeper & $.045$ & $.072$ & $.138$ & $1.803$\\
\bottomrule
\end{tabular}
\caption{Map movement and layer-mean interventions. $M$ is the RMS change of
map-predictor parameters. $R_{\rm mean}$, prediction flips, and logit change
replace every edge map by its layer mean at a fixed checkpoint.
Roman-Empire and Minesweeper use the common $.01$ anchor; the other rows use
validation-selected checkpoints.}
\label{tab:map-diagnostics}
\end{table}

\subsection{Protocol and Controls}

We implement the audit as an adapter around the public DNSD codebase while
leaving its propagation code unchanged. The adapter replaces restriction
builders, applies interventions, and adds the reported parameter-matched node
adapter. We audit the NSD
Laplacian operator and DNSD's sheaf-adjacency operator with odd activation,
gating, and layer normalization \citep{bodnar2022neural,bourgerie2026deep}.

We use Roman-Empire, Amazon-Ratings, Minesweeper, Tolokers, and Questions from
the HeterophilousGraphDataset release \citep{platonov2023critical}, loaded with
PyG 2.6.1 \citep{fey2019pyg}. The loaded \texttt{edge\_index} tensors store
both directions of each undirected edge, yielding 65,854, 186,100, 78,804,
1,038,000, and 307,080 directed entries; before counting, our adapter adds no
symmetrization, self-loops, or coalescing. Their node counts are 22,662,
24,492, 10,000, 11,758, and 48,921, with feature widths 300, 300, 7, 10, and
301. We use all ten official splits, accuracy for the two multiclass datasets,
and AUROC for the three binary datasets. Legacy Squirrel and Chameleon are not
used.

Every NSD and DNSD model has eight layers, stalk dimension eight, hidden width 64, and the
same input and output modules. Adam runs for at most 500 epochs with weight
decay $5\times10^{-4}$, ReduceLROnPlateau, and early stopping after 100 stale
epochs. Split and seed indices are paired. The families are identity, diagonal, full, parameter-matched capacity, fixed-shuffled full,
resampled-shuffled full, layer-shared full, and layer-shared capacity. The
Capacity-F fixes every incidence map to $I_d$ and inserts a shared node-wise
residual adapter after each propagation and normalization block. If
$\widetilde H^{(\ell+1)}\in\mathbb R^{n\times h}$ is that block's output, then
the adapter acts row-wise as
\begin{equation}
H^{(\ell+1)}=\widetilde H^{(\ell+1)}+
\operatorname{SiLU}(\widetilde H^{(\ell+1)}W_{1,\ell}+b_{1,\ell})
W_{2,\ell}+b_{2,\ell},
\label{eq:capacity-adapter}
\end{equation}
where $W_{1,\ell}\in\mathbb R^{h\times m}$ and
$W_{2,\ell}\in\mathbb R^{m\times h}$. The adapter adds channel capacity without
edge input or edge-indexed parameters. With $h=64$, $d=8$, and $L=8$, the two
full-map builders contain $2L(2h+1)d^2=132{,}096$ parameters. We set $m=127$,
giving $L[(2h+1)m+h]=131{,}576$ adapter parameters, or $99.6\%$ of the map
budget. Equation~\eqref{eq:capacity-adapter} is evaluated before the next
layer's map builder; the input lift, classifier, loss, and validation rule are
unchanged. Shared-F replaces the edge-conditioned builders by trainable
$P_\ell,Q_\ell\in\mathbb R^{d\times d}$ and uses
$A_{u\to v}^{(\ell)}=Q_\ell^\top P_\ell$ on every edge. Shared-C combines this action with an adapter so that the controlled
path totals $99.6\%$ of the Full map budget.

For the DNSD replacement comparison, each reported family receives the same
learning-rate grid $\{.001,.003,.01,.03\}$ and the checkpoint is selected by
validation score separately on each split. The NSD rows use the authors'
$.01$ protocol anchor and serve as an operator comparison. Checkpoint
interventions use the validation-selected DNSD full checkpoint on the three
expanded datasets and the common $.01$ anchor in the original two-operator
audit. Resampled-F draws a fresh edge permutation on each training forward
pass. Resampled-F alone uses stochastic validation and test evaluation: its
validation score and entry in Table~\ref{tab:main} average eight logit tensors
before scoring; deterministic families use one forward pass. A separate
test-time decomposition records individual scores and probability averages for
$K\in\{1,2,4,8,16\}$. Paired sign-flip tests for
the primary Full versus Resampled-F
comparison are Holm-adjusted over the five datasets
\citep{holm1979simple}.

Before adding controls, we run the unchanged upstream diagonal model under its
fixed-split anchor. At eight layers it obtains $.811\pm.004$ accuracy on
Roman-Empire and $.889\pm.006$ AUROC on Minesweeper over five seeds; the DNSD
paper reports $.834\pm.009$ and $.894\pm.008$, respectively
\citep{bourgerie2026deep}. We then verify the intervention path numerically.
Re-evaluating the native model changes mean test logits by less than
$10^{-5}$; identity maps reproduce the explicit identity builder; and a
shuffle preserves every paired incidence map exactly and changes only its
edge index. All reported training and intervention cells completed.

\subsection{Unreplaced Advantage on Roman-Empire}

Roman-Empire clears every retraining control. Full DNSD reaches $.8445$;
Resampled-F, Shared-F, and Shared-C reach $.7771$, $.7790$, and $.8054$.
The Full advantage over Resampled-F is $.0675$ on the ten official splits, with
paired interval $[.0639,.0711]$. Identity and fixed-shuffle interventions
remove $.323$ and $.260$ from the common $.01$ checkpoint; replacing each edge
map by its layer mean removes $.249$. Edge variation controls the fitted
predictor and remains useful after retraining.

Three checks locate this advantage. First, nine additional paired runs combine
three splits with three optimization seeds. Full reaches $.8383$ and
Resampled-F $.7762$, a gap of $.0621$ with interval $[.0569,.0674]$. Second,
inference averaging cannot recover native assignment. A Full checkpoint scores
$.8445$ natively, $.5565$ under the first random assignment ($K=1$), and $.6148$ after
averaging 16 assignments. A Resampled-F checkpoint moves only from $.7743$ to
$.7773$ over the same range. Third, learned actions have normalized edge
dispersion $.865$ and covariance effective rank $21.2$ across layers. Endpoint
class pairs explain $.556$ of centered action variance; correlations between
action deviation and feature cosine or endpoint degree are $-.034$ and $.090$.
The learned operator varies across edges and is not summarized by a global
matrix or these two edge covariates.

Stratified layer-mean interventions localize the total effect. Replacing the
actions on Roman-Empire's $62{,}766$ different-label directed edges removes
$.255$ accuracy, while replacing its $3{,}088$ same-label edges removes $.011$.
The first stratum contains $95.3\%$ of the edges, so the comparison locates
total task effect instead of per-edge sensitivity. A median split by endpoint
feature cosine produces nearly equal drops, $.133$ and $.125$. Native
assignment follows the graph's dominant cross-class route, while raw feature
similarity does not identify the useful edges.

The propagation operator determines whether this parameterization succeeds.
At the NSD $.01$ anchor, identity, diagonal, full, and Capacity-F reach $.762$,
$.775$, $.695$, and $.804$ on Roman-Empire. DNSD turns the same full-map
parameterization into the strongest family. Its adjacency update preserves
matrix-valued edge messages at depth, matching the mechanism proposed by
\citet{bourgerie2026deep}.

\subsection{Relied Upon and Replaced}

The other four datasets separate checkpoint organization from retrained task
performance. Full checkpoints lose $.237$, $.122$, $.194$, and $.371$ under
identity intervention on Amazon-Ratings, Questions, Tolokers, and Minesweeper.
Their Full minus Resampled-F gaps are $.0024$, $-.0025$, $-.0027$, and $.0089$;
all paired intervals cross zero. Resampled training recovers the task while
preventing a persistent map assignment.

Single-draw evaluation gives the same result. Moving from one to 16 assignments
changes Resampled-F by $.0017$, $.0001$, $.0003$, and $.0009$ on the four
datasets. The eight-draw predictor in Table~\ref{tab:main} contributes little
to its score. Questions also preserves the replacement result over nine additional
optimization-seed pairs: Resampled-F reaches $.7808$ against $.7747$ for Full.

The layer-shared controls identify the successful replacement. Amazon-Ratings and Questions
fall to $.4455$ and $.7294$ with Shared-F, while Resampled-F reaches $.4758$
and $.7745$. A changing full-map distribution implements a useful route that
one layer matrix cannot express, yet the route does not require persistent
edge correspondence. Tolokers reaches $.8490$ with Resampled-F. Minesweeper
reaches $.9482$ with Shared-F, above $.9347$ for Full. Its large checkpoint
intervention and stronger shared replacement give the clearest real-graph case
of reliance without a Full-family advantage.

The map diagnostics sharpen these distinctions. All five Full predictors move
away from initialization, with $M$ between $.019$ and $.046$. Replacing every
edge action by its layer mean barely changes Questions: the score drop is
$.019$, only three in one thousand predictions flip, and the mean logit change
is $.374$. Its fitted transport acts mainly through a layer-level component,
although retraining one deterministic shared matrix does not recover the best
randomized route. Minesweeper also has a modest mean-map drop of $.072$ beside
an identity drop of $.371$. Shared-F then exceeds Full by $.0135$, locating its
useful action at the shared layer scale. Roman-Empire changes regime: mean-map
replacement removes $.249$, fixed permutation removes $.260$, and no shared or
randomized family closes the retrained gap. Parameter movement alone does not
separate these cases; the intervention and retraining sequence does.

\paragraph{Transfer to a directed sheaf operator.}
We apply the assignment control to the official ICLR 2026 Directed Sheaf
Neural Network (DSNN) using its Texas configuration \citep{fiorini2026reloaded}.
Ten official splits compare native, fixed, and
resampled assignment with the published optimization settings. Their
accuracies are $.800\pm.039$, $.800\pm.053$, and $.814\pm.052$.
Native minus resampled assignment is $-.014$ with paired 95\% CI
$[-.032,+.005]$. The replacement pattern transfers to a direction-aware
operator with complex restriction maps.

\section{Discussion}

\paragraph{What the audit changes.}
Distance from identity describes a training path, and post-hoc intervention
explains one selected computation. Neither establishes task-level value. Labels
constrain classifier-visible directions while leaving most full-map degrees of
freedom unidentified; the frame model shows when adaptation absorbs those
directions and when cancellation makes transport unreplaced. Figure~\ref{fig:claims}
therefore assigns a distinct counterfactual to each claim.

\paragraph{From identity replacement to transport attribution.}
ISN established identity replacement on its benchmark family
\citep{hernandez2026necessity}. We retain that test and add attribution of the
fitted path, parameter capacity, edge variation, and persistent assignment.

\begin{center}
\small
\setlength{\tabcolsep}{2.8pt}
\begin{tabular}{@{}p{.56\columnwidth}cc@{}}
\toprule
Measurement & ISN & This work\\
\midrule
Identity retraining & yes & yes\\
Checkpoint intervention & no & yes\\
Parameter-matched retraining & no & yes\\
Fixed/resampled assignment & no & yes\\
Reliance--replacement boundary & no & exact\\
\bottomrule
\end{tabular}
\captionof{table}{ISN tests identity replacement; our audit adds fitted-path,
capacity, and persistent-assignment attribution.}
\label{tab:isn-scope}
\end{center}

\paragraph{What the real graphs reveal.}
Roman-Empire retains suite-unreplaced value in structured edge-varying actions;
its identity, permutation, and layer-mean sensitivities locate that value in
the fitted path. Shared actions improve Minesweeper, while randomized maps
recover Amazon-Ratings, Questions, and Tolokers. These results concern induced
DNSD actions, not a unique incidence-map factorization.

\paragraph{What the controls identify.}
Fixed-Shuffle-F tests adaptation to one stable reindexing; Resampled-F removes
stable correspondence while preserving the learner, parameter count, and
per-forward map multiset. Single-draw inference excludes ensembling.
Layer-Shared-F and Layer-Shared-C then test one action per layer and matched
capacity. Because resampling also changes gradients and hidden states, its
margin is relative to this stochastic family, not a unique semantic
correspondence.

\paragraph{Scope and implications.}
Unreplaced value is indexed by the operator, training budget, validation rule,
and control suite. NSD reverses the Roman-Empire ordering, while DSNN on Texas
admits resampled replacement. Reporting map movement, fixed-checkpoint
interventions, and paired retraining identifies the first control that closes
the gap and distinguishes channel-absorbable, layer-shared, distributional,
and persistently edge-assigned transport.

\section{Conclusion}

A learned restriction map can move, control a checkpoint, and remain
replaceable after retraining. The task-null theorem and frame boundary explain
this separation. Roman-Empire retains a persistent-assignment advantage across
seven controls, whereas four benchmarks admit matched replacements. Post-hoc
ablation is therefore a reliance test; stronger geometric claims require paired
retraining against alternatives that can absorb the task.

\section*{Reproducibility Statement}

The appendix specifies the reported methods and environment. Experiments used
Ubuntu 20.04, Python 3.11.15, PyTorch 2.7.1, CUDA 12.8, PyG 2.6.1, and one RTX
4090. Complete code, configurations, and run-level records will be released
publicly in a forthcoming project release.

\clearpage
\appendix
\section*{Appendix}
This appendix provides the computational details and supplementary evidence
for the paper. It specifies observation units, model selection, controlled
frame construction, graph protocols, parameter matching, interventions, and
statistical procedures. It then reports the control comparisons, inference
decomposition, optimization-seed checks, and mechanism measurements omitted
from the main paper for space.

\section{Experimental Aggregation and Selection}

\subsection{Observation Units}

The unit of observation is a seed for the controlled-frame experiments and an
official split--seed pair for graph experiments. Every paired comparison uses
the same split or seed on both sides. Table~\ref{tab:units} defines the
aggregation unit for each result.

\begin{table}[t]
\centering
\small
\setlength{\tabcolsep}{4.0pt}
\begin{tabular}{p{.27\textwidth}p{.22\textwidth}p{.43\textwidth}}
\toprule
Evidence & Evaluation unit & Aggregation procedure\\
\midrule
Frame boundary and null energy
& $(\eta,\text{seed})$
& Mean and sample standard deviation over 20 seeds at each frame spread.\\
Correspondence calibration
& $(\eta,\text{seed})$
& Mean map error and three downstream scores at each frame spread.\\
DNSD absolute scores
& dataset--split--family
& Split-wise validation selection, followed by mean and sample standard
deviation over ten official splits.\\
Paired replacement margins
& dataset--split--control pair
& Full minus control on the same split, with a paired confidence interval and
an exact sign-flip test.\\
Checkpoint interventions
& dataset--split--intervention
& Score drop, prediction-flip rate, and logit change relative to the native
checkpoint.\\
Resampling inference
& dataset--split--family--$K$
& Single-assignment expectation and probability averaging for
$K\in\{1,2,4,8,16\}$.\\
Mechanism diagnostics
& split--layer or split--stratum
& Layer-wise action statistics and stratified intervention drops.\\
Optimization-seed check
& dataset--split--\newline optimization seed
& Paired Full--Resampled-F differences over nine runs per dataset.\\
DSNN transfer
& Texas split--assignment
& Absolute means and the paired native--resampled interval over ten splits.\\
\bottomrule
\end{tabular}
\caption{Observation units and aggregation rules.}
\label{tab:units}
\end{table}

The complete model-selection grid contains 1,680 successful configurations.
DNSD contributes $5\times10\times8\times4=1{,}600$ configurations from five
datasets, ten official splits, eight model families, and four learning rates.
The NSD anchor adds $2\times10\times4=80$ configurations on Roman-Empire and
Minesweeper. Fixed-checkpoint analysis evaluates native, identity, fixed
shuffle, and layer mean for each of 50 Full checkpoints. The paired Full and
Resampled-F reruns contribute 100 training evaluations. Their inference
decomposition contributes seven aggregate records per checkpoint, for 700
aggregate records in total.

Before aggregation, every dataset--split--family--learning-rate key is checked
for uniqueness and successful completion. Coverage checks verify the expected
numbers of training runs, interventions, inference draws, and mechanism
evaluations.

\subsection{Model Selection}

For each DNSD family and official split, the learning rate with the highest
validation score is selected from $\{.001,.003,.01,.03\}$. Full and
Resampled-F are rerun as a paired decomposition using their separately selected
learning rates. These reruns supply their absolute scores and the primary
margin. Other families use their selected grid configurations. Roman-Empire
and Minesweeper use the common $.01$ operator anchor for fixed-checkpoint
interventions; the other three datasets use validation-selected Full
checkpoints.

\begin{table}[t]
\centering
\small
\setlength{\tabcolsep}{2.0pt}
\renewcommand{\arraystretch}{1.08}
\begin{tabular}{@{}p{.20\textwidth}p{.28\textwidth}p{.34\textwidth}@{}}
\toprule
Dataset & Full & Resampled-F\\
\midrule
Roman-Empire & $.003{:}2,\ .01{:}8$ &
$.001{:}3,\ .003{:}4,$\newline $.01{:}1,\ .03{:}2$\\
\addlinespace[2pt]
Amazon-Ratings & $.001{:}6,\ .003{:}3,$\newline $.01{:}1$ &
$.001{:}4,\ .003{:}6$\\
\addlinespace[2pt]
Questions & $.001{:}5,\ .003{:}5$ & $.001{:}6,\ .003{:}4$\\
\addlinespace[2pt]
Tolokers & $.001{:}2,\ .003{:}3,$\newline $.01{:}3,\ .03{:}2$ &
$.001{:}4,\ .003{:}4,$\newline $.01{:}2$\\
\addlinespace[2pt]
Minesweeper & $.01{:}4,\ .03{:}6$ &
$.001{:}1,\ .01{:}4,$\newline $.03{:}5$\\
\bottomrule
\end{tabular}
\caption{Validation-selected learning rates across ten splits. Each entry is
learning rate:number of selected splits; counts sum to ten within every
dataset--family cell, and unlisted rates have zero count.}
\label{tab:lr-counts}
\end{table}

\section{Controlled Frame Experiments}

\subsection{Generation and Splits}

For each seed, balanced binary labels are randomly permuted across $n=1{,}000$
examples. Writing the signed label as $Y\in\{-1,+1\}$, each example has
$G=8$ observed frames in dimension $d=6$:
\begin{equation}
X_g=R_g( Ys e_1+\varepsilon_g),\qquad
\varepsilon_g\sim\mathcal N(0,\sigma^2I_d),
\label{eq:frame}
\end{equation}
with $s=2$, $\sigma=.75$, and
$\eta\in\{0,.25,.5,.75,1\}$. A seed-specific permutation assigns 500 examples
to training, 250 to validation, and 250 to test. Twenty seeds are run at each
frame spread.

The signal-plane component of $R_g$ rotates by $2\pi\eta g/G$. The orthogonal
complement is populated by a seed-specific orthogonal frame. Thus, $\eta$
changes cancellation of the label-bearing direction while retaining
nontrivial nuisance-coordinate transport.

\begin{table}[t]
\centering
\small
\setlength{\tabcolsep}{4.0pt}
\begin{tabular}{lrlr}
\toprule
Quantity & Value & Quantity & Value\\
\midrule
Examples & $1{,}000$ & Frames & $8$\\
Dimension & $6$ & Seeds per $\eta$ & $20$\\
Train/val/test & $500/250/250$ & Signal $s$ & $2$\\
Noise $\sigma$ & $.75$ & Frame spreads & $5$\\
Label-only epochs & $1{,}000$ & Patience & $150$\\
Learning rate & $.02$ & Weight decay & $10^{-4}$\\
\bottomrule
\end{tabular}
\caption{Label-only frame settings.}
\label{tab:frame-settings}
\end{table}

\subsection{Calibration and Label-Only Fitting}

The correspondence calibration learns an explicit map table against the
oracle targets $(R_g^\top)_g$ for 200 map steps, then fits downstream heads for
300 steps. It verifies accessibility across all five frame spreads. The main
controlled experiment removes this correspondence loss: a map table initialized
at identity and a linear classifier are optimized jointly from labels. A
separate linear head is fitted to identity-averaged observations on the same
split.

For a learned head $w$, define $q_g=T_g^\top w$. The recorded task-null energy
uses the orthogonal decomposition
\begin{equation}
T_g=\frac{wq_g^\top}{\|w\|_2^2}+B_g,\qquad B_g^\top w=0,
\end{equation}
and
\begin{equation}
E_{\rm null}=\frac{\sum_g\|B_g\|_F^2}{\sum_g\|T_g\|_F^2}.
\end{equation}
For $d=6$, the dimension-only reference fraction is
$(d^2-d)/d^2=5/6=.833$.

\begin{table}[t]
\centering
\small
\setlength{\tabcolsep}{3.0pt}
\begin{tabular}{rrrrr}
\toprule
$\eta$ & Map error & Native & Post-$I$ & Retrain-$I$\\
\midrule
$0$ & $7.07{\times}10^{-5}(6.3{\times}10^{-6})$ & $1.000$ & $1.000$ & $1.000$\\
$.25$ & $7.24{\times}10^{-5}(5.4{\times}10^{-6})$ & $1.000$ & $1.000$ & $1.000$\\
$.50$ & $7.91{\times}10^{-5}(5.2{\times}10^{-6})$ & $1.000$ & $.833$ & $1.000$\\
$.75$ & $8.38{\times}10^{-5}(4.7{\times}10^{-6})$ & $1.000$ & $.149$ & $.988$\\
$1$ & $7.77{\times}10^{-5}(5.7{\times}10^{-6})$ & $1.000$ & $.501$ & $.492$\\
\bottomrule
\end{tabular}
\caption{Correspondence-supervised calibration. Map error is mean (sample
standard deviation) over 20 seeds. Scores are downstream means.}
\label{tab:calibration}
\end{table}

\begin{table}[t]
\centering
\small
\setlength{\tabcolsep}{3.5pt}
\begin{tabular}{crrrrrrr}
\toprule
$\eta$ & Full & Post-$I$ & Retrain-$I$ & $R_I$ & $N_I$ & Null energy & $M$\\
\midrule
$0$ & $.998(.003)$ & $.773(.304)$ & $.999(.002)$ & $.225(.302)$ & $-.001(.003)$ & $.788(.046)$ & $.725(.393)$\\
$.25$ & $.999(.003)$ & $.784(.253)$ & $.999(.002)$ & $.214(.252)$ & $-.001(.004)$ & $.792(.037)$ & $.676(.297)$\\
$.50$ & $.997(.004)$ & $.655(.232)$ & $.997(.004)$ & $.342(.231)$ & $.001(.006)$ & $.788(.046)$ & $.775(.266)$\\
$.75$ & $.998(.002)$ & $.642(.174)$ & $.983(.008)$ & $.356(.173)$ & $.015(.009)$ & $.784(.035)$ & $.858(.178)$\\
$1$ & $.998(.003)$ & $.497(.030)$ & $.491(.031)$ & $.501(.030)$ & $.507(.033)$ & $.769(.052)$ & $.948(.169)$\\
\bottomrule
\end{tabular}
\caption{Label-only frame results as mean (sample standard deviation) over 20
seeds. $M$ is RMS map-table movement from identity initialization.}
\label{tab:label-only-full}
\end{table}

\section{Real-Graph Protocol}

\subsection{Datasets and Metrics}

The graph audit uses five public heterophilous graph benchmarks and the ten
fixed splits distributed with each dataset. PyTorch Geometric 2.6.1 loads the
data. No additional graph preprocessing is applied before model construction.
Table~\ref{tab:datasets} reports the tensors used by the experiments. Directed
entries are columns of the loaded \texttt{edge\_index}; both orientations of
each undirected edge are stored.

\begin{table}[t]
\centering
\small
\setlength{\tabcolsep}{5.0pt}
\begin{tabular}{lrrrrl}
\toprule
Dataset & Nodes & Directed entries & Features & Splits & Metric\\
\midrule
Roman-Empire & $22{,}662$ & $65{,}854$ & $300$ & $10$ & Accuracy\\
Amazon-Ratings & $24{,}492$ & $186{,}100$ & $300$ & $10$ & Accuracy\\
Questions & $48{,}921$ & $307{,}080$ & $301$ & $10$ & AUROC\\
Tolokers & $11{,}758$ & $1{,}038{,}000$ & $10$ & $10$ & AUROC\\
Minesweeper & $10{,}000$ & $78{,}804$ & $7$ & $10$ & AUROC\\
\bottomrule
\end{tabular}
\caption{Graph sizes and evaluation metrics.}
\label{tab:datasets}
\end{table}

Every NSD and DNSD model has eight layers, stalk dimension $d=8$, hidden width
$h=64$, and common input and output modules. Adam runs for at most 500 epochs
with weight decay $5\times10^{-4}$, ReduceLROnPlateau, gradient clipping at
one, and early stopping after 100 stale epochs. Split and seed indices are
paired. Resampled-F validation and reported test evaluation average eight
logit tensors before scoring; deterministic families use one forward pass.

The NSD comparison uses the same depth, width, stalk dimension, and $.01$
learning-rate anchor on Roman-Empire and Minesweeper. The DSNN transfer uses
the official Texas configuration: $d=3$, four layers, hidden width 20,
learning rate $.02$, weight decay $5\times10^{-3}$, dropout $.7$, at most
1,500 epochs, and early stopping after 200 stale epochs.

\section{Control Families and Parameter Budgets}

\subsection{Restriction Builders}

DNSD learns separate source and target incidence maps at each layer. Full-F,
Fixed-F, and Resampled-F use the same unrestricted builders and have identical
parameter counts. Fixed-F applies one seed- and layer-specific permutation to
complete source/target map pairs in every forward pass. Resampled-F draws a
fresh permutation in every forward pass. Pairing source and target maps before
reassignment preserves their per-forward multiset.

Layer-Shared-F replaces each edge-conditioned builder by one trainable
$d\times d$ source map and one trainable $d\times d$ target map per layer.
Capacity-F fixes incidence maps to identity and reallocates the Full map budget
to node-wise residual adapters. Shared-C combines layer-shared maps with the
same adapter form.

An unrestricted incidence builder maps concatenated endpoint embeddings in
$\mathbb R^{2h}$ to $d^2$ entries. Including bias, one builder contains
$(2h+1)d^2$ parameters. With source and target builders in $L$ layers,
\begin{equation}
P_{\rm Full}=2L(2h+1)d^2.
\end{equation}
For $L=8$, $h=64$, and $d=8$, $P_{\rm Full}=132{,}096$. The diagonal family
has $2L(2h+1)d=16{,}512$ map parameters.

\subsection{Capacity-F and Shared-C}

Each residual adapter acts on $x\in\mathbb R^h$ as
\begin{equation}
x\mapsto x+W_2\operatorname{SiLU}(W_1x+b_1)+b_2,
\end{equation}
where $W_1\in\mathbb R^{m\times h}$ and
$W_2\in\mathbb R^{h\times m}$. One adapter contains
$(2h+1)m+h$ parameters. Capacity-F uses $m=127$, giving
\begin{equation}
P_{\rm Capacity}=8(129\cdot127+64)=131{,}576,
\end{equation}
or $99.61\%$ of the Full map budget.

Layer-Shared-F contains $2Ld^2=1{,}024$ map parameters. Shared-C uses adapter
width $m=126$, giving 130,544 adapter parameters and 131,568 controlled-path
parameters in total, or $99.60\%$ of Full.

\begin{table}[t]
\centering
\footnotesize
\setlength{\tabcolsep}{3pt}
\begin{tabular}{lrrr}
\toprule
Family & Map params. & Adapter params. & Controlled total\\
\midrule
Full-F & $132{,}096$ & $0$ & $132{,}096$\\
Capacity-F & $0$ & $131{,}576$ & $131{,}576$\\
Shared-F & $1{,}024$ & $0$ & $1{,}024$\\
Shared-C & $1{,}024$ & $130{,}544$ & $131{,}568$\\
\bottomrule
\end{tabular}
\caption{DNSD map-path parameter budgets. Common input, propagation, and
output modules are unchanged within each dataset.}
\label{tab:budgets}
\end{table}

\section{Interventions and Statistical Procedures}

\subsection{Fixed-Checkpoint Interventions}

Let $F_{u\unlhd e}^{(\ell)}$ and $F_{v\unlhd e}^{(\ell)}$ be the incidence
maps on edge $e$ in layer $\ell$. Interventions operate on complete pairs
before the unchanged propagation operator receives them. Identity sets both
maps to $I_d$. Fixed shuffle applies one layer-specific permutation $\pi_\ell$
to the edge-indexed pair table. Layer mean replaces each source table and
target table by its respective layer mean. All other learned weights remain
fixed.

The induced action used for diagnostics is
\begin{equation}
A_{u\to v}^{(\ell)}=
\left(F_{v\unlhd e}^{(\ell)}\right)^\top F_{u\unlhd e}^{(\ell)}.
\end{equation}

\subsection{Retrained Assignment Controls}

Fixed-F uses the same stable permutation throughout optimization and
evaluation. Resampled-F draws a fresh permutation on every training forward
pass. The inference decomposition draws 16 assignments and reports both the
mean score of individual assignments and the score obtained after averaging
predicted probabilities for the first $K\in\{1,2,4,8,16\}$ draws.

\subsection{Paired Summaries}

For each dataset and control, let $d_i=S_i(\text{Full})-S_i(\text{control})$
be the difference on official split $i\in\{1,\ldots,10\}$. The reported mean
and paired 95\% interval are
\begin{equation}
\bar d=\frac{1}{10}\sum_{i=1}^{10}d_i,
\qquad
\bar d\ \pm\ t_{.975,9}\frac{s_d}{\sqrt{10}}.
\end{equation}
Absolute scores use the mean and sample standard deviation over the same ten
splits.

The primary Full--Resampled-F comparison additionally uses an exact paired
sign-flip test. All $2^{10}$ sign assignments are enumerated; the two-sided
$p$-value is the fraction whose absolute mean is at least the observed absolute
mean. The five dataset-level values are Holm-adjusted. The nine-run
optimization-seed check and ten-split DSNN transfer use the same paired
$t$ interval.

\section{Supplementary Real-Graph Results}

\subsection{Learning-Rate Robustness}

Figure~\ref{fig:lr} pairs Full and Resampled-F at each common learning rate,
before family-specific validation selection. Roman-Empire retains a positive
gap at all four rates. The other curves remain close to zero on the same scale;
Minesweeper has a small positive fixed-rate gap, while its Shared-F control
still exceeds Full after validation selection.

\begin{figure}[t]
\centering
\includegraphics[width=\textwidth]{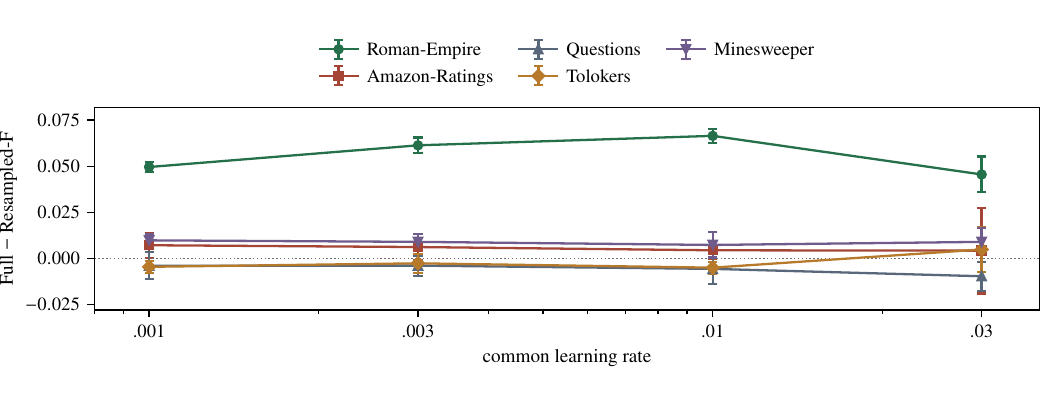}
\caption{Full minus Resampled-F at each common learning rate. Error bars are
paired 95\% confidence intervals over the ten official splits.}
\label{fig:lr}
\end{figure}

\subsection{Controls Omitted from the Main Paired Table}

Table~\ref{tab:extra-controls} reports paired intervals for the four controls
whose absolute scores, but not paired differences, appear in the main paper.
Roman-Empire remains positive against each control. Minesweeper reverses the
ordering against identity and Capacity-F, and Questions reaches parity with
Fixed-F.

\begin{table}[t]
\centering
\small
\setlength{\tabcolsep}{3.1pt}
\resizebox{\textwidth}{!}{%
\begin{tabular}{lcccc}
\toprule
Dataset & Full--Identity & Full--Diagonal & Full--Capacity-F & Full--Fixed-F\\
\midrule
Roman-Empire & $+.0702\ [+.0654,+.0750]$ & $+.0287\ [+.0247,+.0326]$ & $+.0496\ [+.0463,+.0529]$ & $+.1253\ [+.1151,+.1356]$\\
Amazon-Ratings & $+.0470\ [+.0399,+.0541]$ & $+.0173\ [+.0112,+.0235]$ & $+.0563\ [+.0477,+.0649]$ & $+.0241\ [+.0166,+.0316]$\\
Questions & $+.0425\ [+.0339,+.0511]$ & $+.0195\ [+.0120,+.0271]$ & $+.0328\ [+.0248,+.0409]$ & $-.0041\ [-.0097,+.0014]$\\
Tolokers & $+.0058\ [-.0002,+.0118]$ & $+.0060\ [+.0012,+.0107]$ & $+.0074\ [+.0016,+.0131]$ & $+.0022\ [-.0023,+.0067]$\\
Minesweeper & $-.0102\ [-.0166,-.0038]$ & $-.0036\ [-.0113,+.0041]$ & $-.0102\ [-.0175,-.0029]$ & $+.0301\ [+.0238,+.0364]$\\
\bottomrule
\end{tabular}%
}
\caption{Secondary paired retraining margins. Each cell is mean difference
and paired 95\% confidence interval over ten official splits. Full scores come
from the paired reruns; controls use their split-wise validation-selected
configurations.}
\label{tab:extra-controls}
\end{table}

\subsection{Fixed-Checkpoint Reliance}

Every identity, shuffle, and layer-mean intervention produces a positive score
drop on every one of the 50 Full checkpoints. Table~\ref{tab:interventions}
reports the aggregate magnitude.

\begin{table}[t]
\centering
\small
\setlength{\tabcolsep}{3.4pt}
\begin{tabular}{lrrr}
\toprule
Dataset & Identity & Shuffle & Layer mean\\
\midrule
Roman-Empire & $.3231(.0236)$ & $.2599(.0121)$ & $.2490(.0305)$\\
Amazon-Ratings & $.2371(.0130)$ & $.1255(.0456)$ & $.1328(.0635)$\\
Questions & $.1223(.0221)$ & $.0254(.0125)$ & $.0192(.0098)$\\
Tolokers & $.1940(.0282)$ & $.0943(.0444)$ & $.0905(.0437)$\\
Minesweeper & $.3714(.1052)$ & $.1101(.0518)$ & $.0715(.0322)$\\
\bottomrule
\end{tabular}
\caption{Fixed-checkpoint score drops as mean (sample standard deviation) over
ten splits.}
\label{tab:interventions}
\end{table}

\subsection{Single Assignments and Probability Averaging}

Table~\ref{tab:resampling} separates individual random assignments from
probability averaging. Randomizing a native Full checkpoint produces a large
drop on every dataset. A Resampled-F checkpoint is stable across assignments,
and its $K=16$ probability average differs little from its individual-score
expectation.

\begin{table}[t]
\centering
\small
\setlength{\tabcolsep}{2.0pt}
\resizebox{\textwidth}{!}{%
\begin{tabular}{lrrrrrr}
\toprule
& \multicolumn{3}{c}{Full checkpoint} & \multicolumn{3}{c}{Resampled-F checkpoint}\\
\cmidrule(lr){2-4}\cmidrule(lr){5-7}
Dataset & Native & $\mathbb E[S_1]$ & Prob.-avg. $K=16$ & $\mathbb E[S_1]$ & Prob.-avg. $K=1$ & Prob.-avg. $K=16$\\
\midrule
Roman-Empire & $.8445(.0049)$ & $.5568(.0612)$ & $.6148(.0354)$ & $.7749(.0067)$ & $.7743(.0069)$ & $.7773(.0069)$\\
Amazon-Ratings & $.4781(.0089)$ & $.3587(.0340)$ & $.3698(.0401)$ & $.4746(.0073)$ & $.4738(.0085)$ & $.4755(.0072)$\\
Questions & $.7720(.0101)$ & $.7473(.0163)$ & $.7506(.0153)$ & $.7742(.0088)$ & $.7746(.0089)$ & $.7748(.0089)$\\
Tolokers & $.8464(.0060)$ & $.7507(.0593)$ & $.7556(.0591)$ & $.8488(.0082)$ & $.8488(.0081)$ & $.8492(.0082)$\\
Minesweeper & $.9347(.0102)$ & $.8511(.0303)$ & $.8798(.0205)$ & $.9253(.0128)$ & $.9250(.0128)$ & $.9259(.0128)$\\
\bottomrule
\end{tabular}%
}
\caption{Assignment decomposition as mean (sample standard deviation) over ten
splits. $\mathbb E[S_1]$ averages the scores of 16 individual assignments;
Prob.-avg. scores the mean predicted probability.}
\label{tab:resampling}
\end{table}

\subsection{Optimization Seeds and Operator Transfer}

The additional optimization-seed check uses three official splits and three
optimization seeds per split. Roman-Empire retains its positive margin;
Questions favors Resampled-F. Table~\ref{tab:seed-check} also reports the DSNN
assignment transfer.

\begin{table}[t]
\centering
\small
\setlength{\tabcolsep}{2.5pt}
\begin{tabular}{lrrr}
\toprule
Dataset & Full & Resampled & Gap [95\% CI]\\
\midrule
Roman-Empire & $.8383$ & $.7762$ & $+.0621\ [+.0569,+.0674]$\\
Questions & $.7747$ & $.7808$ & $-.0061\ [-.0120,-.0002]$\\
\bottomrule
\end{tabular}
\caption{Optimization-seed check over nine paired runs per dataset.}
\label{tab:seed-check}
\end{table}

\begin{table}[t]
\centering
\small
\setlength{\tabcolsep}{4.0pt}
\begin{tabular}{lrr}
\toprule
DSNN mode & Accuracy & Sample std.\\
\midrule
Native & $.8000$ & $.0386$\\
Fixed & $.8000$ & $.0528$\\
Resampled & $.8135$ & $.0517$\\
\bottomrule
\end{tabular}
\caption{DSNN Texas transfer over ten official splits. Native minus Resampled
is $-.0135$ with paired 95\% interval $[-.0323,+.0053]$.}
\label{tab:dsnn}
\end{table}

\subsection{NSD Operator Anchor}

The same map families behave differently under the NSD Laplacian update.
Table~\ref{tab:nsd} uses the common $.01$ learning-rate anchor. Full NSD is
unstable on Roman-Empire and trails Capacity-F on both datasets.

\begin{table}[t]
\centering
\small
\setlength{\tabcolsep}{4.0pt}
\begin{tabular}{llr}
\toprule
Dataset & Family & Score\\
\midrule
Roman-Empire & Identity & $.7618(.0077)$\\
& Diagonal & $.7751(.0062)$\\
& Full & $.6951(.1433)$\\
& Capacity-F & $.8040(.0072)$\\
Minesweeper & Identity & $.9239(.0041)$\\
& Diagonal & $.9239(.0073)$\\
& Full & $.9008(.0088)$\\
& Capacity-F & $.9358(.0072)$\\
\bottomrule
\end{tabular}
\caption{NSD anchor scores as mean (sample standard deviation) over ten
official splits.}
\label{tab:nsd}
\end{table}

\section{Roman-Empire Mechanism Diagnostics}

For each Full checkpoint, the native forward pass records the induced action on
every directed edge and layer. Let $a_e$ be a flattened action and $\bar a$ its
layer mean. Normalized dispersion is
\begin{equation}
D=\sqrt{\frac{\sum_e\|a_e-\bar a\|_2^2}{\sum_e\|a_e\|_2^2}}.
\end{equation}
The covariance effective rank is the entropy effective rank of the centered
action covariance. Class-pair variance is the fraction of centered action
energy explained by endpoint-label-pair means. Feature-cosine and endpoint-degree
correlations use a fixed sample of up to 20,000 edges per layer; endpoint degree
is $\log(1+\deg(u)+\deg(v))$.

Figure~\ref{fig:layers} shows that action dispersion and covariance rank grow
through depth. Endpoint class pairs explain a stable fraction of variation,
while feature cosine and endpoint degree remain weak correlates of action
deviation.

\begin{figure}[t]
\centering
\includegraphics[width=\textwidth]{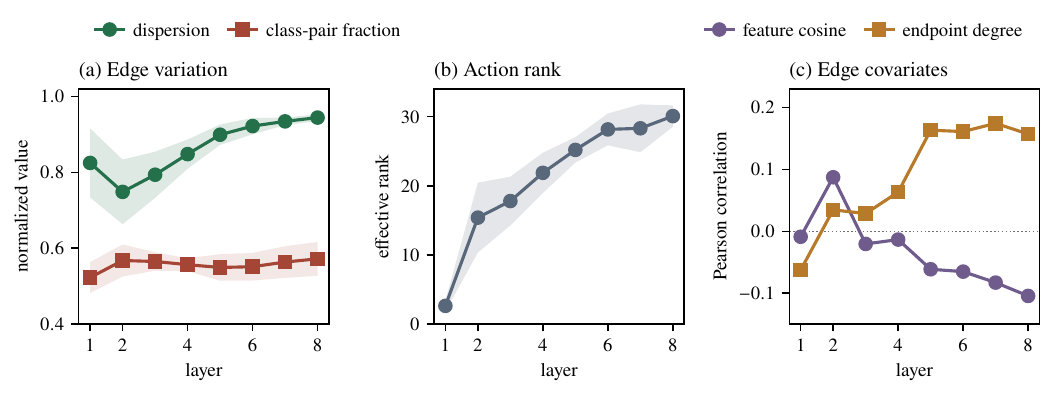}
\caption{Roman-Empire action diagnostics by DNSD layer. Lines are means over
ten splits; shaded regions show one sample standard deviation.}
\label{fig:layers}
\end{figure}

The stratified intervention replaces incidence maps by their layer means on
one edge subset at a time. Table~\ref{tab:strata} reports total score drops and
the number of directed entries in each stratum. Different-label edges dominate
the graph, so their result localizes total effect rather than per-edge
sensitivity. The feature-cosine split has balanced counts and nearly equal
drops.

\begin{table}[t]
\centering
\small
\setlength{\tabcolsep}{4.0pt}
\begin{tabular}{lrr}
\toprule
Stratum & Directed entries & Score drop\\
\midrule
Same label & $3{,}088$ & $.0112(.0023)$\\
Different label & $62{,}766$ & $.2549(.0604)$\\
Low feature cosine & $32{,}928$ & $.1325(.0123)$\\
High feature cosine & $32{,}926$ & $.1253(.0088)$\\
\bottomrule
\end{tabular}
\caption{Roman-Empire stratified layer-mean interventions as mean (sample
standard deviation) over ten splits.}
\label{tab:strata}
\end{table}

\section{Implementation and Environment}

The experiments ran on Ubuntu 20.04 with Python 3.11.15, PyTorch 2.7.1, CUDA
12.8, PyTorch Geometric 2.6.1, and one NVIDIA RTX 4090. Native re-evaluation
changes mean test logits by less than $10^{-5}$. Identity interventions match
the explicit identity builder, and shuffling preserves complete incidence-map
pairs exactly while changing only their edge assignment. All reported training
and intervention cells completed successfully.

The forthcoming public release will contain every run-level observation used by
the paper and this appendix. Summary files are deterministic reconstructions
from those records; model checkpoints and public datasets will not be bundled.

\clearpage

\begin{small}
\bibliographystyle{plainnat}
\bibliography{references}

@article{hansen2019spectral,
  author  = {Hansen, Jakob and Ghrist, Robert},
  title   = {Toward a Spectral Theory of Cellular Sheaves},
  journal = {Journal of Applied and Computational Topology},
  year    = {2019},
  volume  = {3},
  pages   = {315--358},
  doi     = {10.1007/s41468-019-00038-7},
  number  = {4},
  url     = {https://doi.org/10.1007/s41468-019-00038-7}
}

@inproceedings{hansen2020sheaf,
  title     = {Sheaf Neural Networks},
  author    = {Hansen, Jakob and Gebhart, Thomas},
  booktitle = {NeurIPS 2020 Workshop on Topological Data Analysis and Beyond},
  year      = {2020},
  note      = {Spotlight presentation},
  url       = {https://neurips.cc/virtual/2020/20055}
}

@inproceedings{bodnar2022neural,
 author = {Bodnar, Cristian and Di Giovanni, Francesco and Chamberlain, Benjamin P. and Li{\`o}, Pietro and Bronstein, Michael},
 booktitle = {Advances in Neural Information Processing Systems},
 doi = {10.52202/068431-1346},
 editor = {S. Koyejo and S. Mohamed and A. Agarwal and D. Belgrave and K. Cho and A. Oh},
 pages = {18527--18541},
 publisher = {Curran Associates, Inc.},
 title = {Neural Sheaf Diffusion: A Topological Perspective on Heterophily and Oversmoothing in {GNNs}},
 url = {https://proceedings.neurips.cc/paper_files/paper/2022/file/75c45fca2aa416ada062b26cc4fb7641-Paper-Conference.pdf},
 volume = {35},
 year = {2022}
}

@inproceedings{barbero2022connection,
  title = 	 {Sheaf Neural Networks with Connection Laplacians},
  author =       {Barbero, Federico and Bodnar, Cristian and S\'aez de Oc\'ariz Borde, Haitz and Bronstein, Michael and Veli\v{c}kovi\'c, Petar and Li\`o, Pietro},
  booktitle = 	 {Proceedings of Topological, Algebraic, and Geometric Learning Workshops 2022},
  pages = 	 {28--36},
  year = 	 {2022},
  editor = 	 {Cloninger, Alexander and Doster, Timothy and Emerson, Tegan and Kaul, Manohar and Ktena, Ira and Kvinge, Henry and Miolane, Nina and Rieck, Bastian and Tymochko, Sarah and Wolf, Guy},
  volume = 	 {196},
  series = 	 {Proceedings of Machine Learning Research},
  month = 	 {25 Feb--22 Jul},
  publisher =    {PMLR},
  url = 	 {https://proceedings.mlr.press/v196/barbero22a.html}
}

@inproceedings{bamberger2025bundle,
 author = {Bamberger, Jacob and Barbero, Federico and Dong, Xiaowen and Bronstein, Michael},
 booktitle = {International Conference on Learning Representations},
 title = {Bundle Neural Network for message diffusion on graphs},
 url = {https://iclr.cc/virtual/2025/poster/28114},
 year = {2025}
}

@misc{borgi2026polynsd,
      title={Polynomial Neural Sheaf Diffusion: A Spectral Filtering Approach on Cellular Sheaves}, 
      author={Alessio Borgi and Fabrizio Silvestri and Pietro Li{\`o}},
      year={2026},
      eprint={2512.00242},
      archivePrefix={arXiv},
      primaryClass={cs.LG},
      url={https://arxiv.org/abs/2512.00242}, 
}

@misc{bourgerie2026deep,
      title={Deep Neural Sheaf Diffusion}, 
        author= {Bourgerie, R{\'e}mi and
              Girdzijauskas, {\v{S}}ar{\=u}nas and
              Fodor, Viktoria},
      year={2026},
      eprint={2605.19021},
      archivePrefix={arXiv},
      primaryClass={cs.LG},
      url={https://arxiv.org/abs/2605.19021}, 
}

@inproceedings{choi2026pacbayes,
  title     = {Sheaf Graph Neural Networks via {PAC-Bayes} Spectral Optimization},
  author    = {Choi, Y. and Choi, J. and Ko, T. and Kim, J. and Kim, C.-K.},
  booktitle = {Proceedings of the AAAI Conference on Artificial Intelligence},
  pages     = {20570--20578},
  year      = {2026},
  volume    = {40},
  number    = {25},
  doi       = {10.1609/aaai.v40i25.39193},
  url       = {https://doi.org/10.1609/aaai.v40i25.39193}
}

@inproceedings{fiorini2026reloaded,
  title     = {Sheaves Reloaded: A Direction Awakening},
  author    = {Fiorini, Stefano and
               Aktas, Hakan Emre and
               Duta, Iulia and
               Morerio, Pietro and
               Del Bue, Alessio and
               Lio, Pietro and
               Coniglio, Stefano},
  booktitle = {International Conference on Learning Representations},
  year      = {2026},
  url       = {https://iclr.cc/virtual/2026/poster/10007988}
}

@inproceedings{ribeiro2026cooperative,
  title     = {Cooperative Sheaf Neural Networks},
  author    = {Ribeiro, Andr{\'e} and
               Ten{\'o}rio, Ana Luiza and
               Belieni, Juan and
               Souza, Amauri H. and
               Mesquita, Diego},
  booktitle = {International Conference on Learning Representations},
  year      = {2026},
  url       = {https://iclr.cc/virtual/2026/poster/10011050}
}

@article{battiloro2024tangent,
  author={Battiloro, Claudio and Wang, Zhiyang and Riess, Hans and Di Lorenzo, Paolo and Ribeiro, Alejandro},
  journal={IEEE Transactions on Signal Processing}, 
  title={Tangent Bundle Convolutional Learning: From Manifolds to Cellular Sheaves and Back}, 
  year={2024},
  volume={72},
  number={},
  pages={1892--1909},
  doi={10.1109/TSP.2024.3379862}
  }

@inproceedings{caralt2024joint,
  title = 	 {Joint Diffusion Processes as an Inductive Bias in Sheaf Neural Networks},
  author =       {Hernandez Caralt, Ferran and Bern{\'a}rdez Gil, Guillermo and Duta, Iulia and Li{\`o}, Pietro and Alarc{\'o}n Cot, Eduard},
  booktitle = 	 {Proceedings of the Geometry-grounded Representation Learning and Generative Modeling Workshop (GRaM)},
  pages = 	 {249--263},
  year = 	 {2024},
  editor = 	 {Vadgama, Sharvaree and Bekkers, Erik and Pouplin, Alison and Kaba, Sekou-Oumar and Walters, Robin and Lawrence, Hannah and Emerson, Tegan and Kvinge, Henry and Tomczak, Jakub and Jegelka, Stephanie},
  volume = 	 {251},
  series = 	 {Proceedings of Machine Learning Research},
  month = 	 {29 Jul},
  publisher =    {PMLR},
  url = 	 {https://proceedings.mlr.press/v251/hernandez-caralt24a.html}
}

@inproceedings{hernandez2026necessity,
  author = {Hernandez Caralt, Ferran and
            Gonz{\`a}lez i Catal{\`a}, Mar and
            Bazaga, Adri{\'a}n and
            Li{\`o}, Pietro},
  title = {On the Necessity of Learnable Sheaf Laplacians},
  booktitle = {ICLR 2026 Workshop on Geometry-grounded Representation Learning and Generative Modeling ({GRaM}), Tiny Paper Track},
  url = {https://openreview.net/forum?id=IuXTYO5ggp},
  year = {2026}
}

@misc{donmez2026degeneracy,
      title={Oversmoothing as Representation Degeneracy in Neural Sheaf Diffusion}, 
      author={Arif D{\"o}nmez and Axel Mosig and Ellen Fritsche and Katharina Koch},
      year={2026},
      eprint={2605.11178},
      archivePrefix={arXiv},
      primaryClass={cs.LG},
      url={https://arxiv.org/abs/2605.11178}, 
}

@inproceedings{hooker2019benchmark,
title = {A Benchmark for Interpretability Methods in Deep Neural Networks},
author = {Hooker, Sara and Erhan, Dumitru and Kindermans, Pieter-Jan and Kim, Been},
booktitle = {Advances in Neural Information Processing Systems},
editor = {H. Wallach and H. Larochelle and A. Beygelzimer and F. d\textquotesingle Alch\'{e}-Buc and E. Fox and R. Garnett},
pages = {9737--9748},
year = {2019},
volume={32},
publisher = {Curran Associates, Inc.},
url = {http://papers.nips.cc/paper/9167-a-benchmark-for-interpretability-methods-in-deep-neural-networks.pdf}
}

@article{fisher2019all,
  author  = {Aaron Fisher and Cynthia Rudin and Francesca Dominici},
  title   = {All Models are Wrong, but Many are Useful: Learning a Variable's Importance by Studying an Entire Class of Prediction Models Simultaneously},
  journal = {Journal of Machine Learning Research},
  year    = {2019},
  volume  = {20},
  number  = {177},
  pages   = {1--81},
  url     = {http://jmlr.org/papers/v20/18-760.html}
}

@inproceedings{platonov2023critical,
  title     = {A Critical Look at the Evaluation of {GNNs} under Heterophily:
               Are We Really Making Progress?},
  author    = {Platonov, Oleg and
               Kuznedelev, Denis and
               Diskin, Michael and
               Babenko, Artem and
               Prokhorenkova, Liudmila},
  booktitle = {International Conference on Learning Representations},
  year      = {2023},
  url       = {https://openreview.net/forum?id=tJbbQfw-5wv}
}

@inproceedings{fey2019pyg,
      title={Fast Graph Representation Learning with PyTorch Geometric}, 
      booktitle = {ICLR Workshop on Representation Learning on Graphs and Manifolds},
      author={Matthias Fey and Jan Eric Lenssen},
      year={2019},
      url = {https://rlgm.github.io/papers/2.pdf},
}

@article{holm1979simple,
  author  = {Holm, Sture},
  title   = {A Simple Sequentially Rejective Multiple Test Procedure},
  journal = {Scandinavian Journal of Statistics},
  year    = {1979},
  volume  = {6},
  number  = {2},
  pages   = {65--70},
  url     = {http://www.jstor.org/stable/4615733}
}

@article{singer2012vector,
author = {Singer, Amit and Wu, Hau-Tieng},
year = {2012},
number = {8},
pages = {1067--1144},
title = {Vector diffusion maps and the connection Laplacian},
volume = {65},
journal = {Communications on Pure and Applied Mathematics},
doi = {10.1002/cpa.21395}
}

@misc{gillespie2024bayesian,
      title={Bayesian Sheaf Neural Networks}, 
      author={Patrick Gillespie and {Bou Hamdan}, Layal and Ioannis Schizas and
      David L. Boothe and Vasileios Maroulas},
      year={2025},
      eprint={2410.09590},
      archivePrefix={arXiv},
      primaryClass={cs.LG},
      url={https://arxiv.org/abs/2410.09590}, 
}

@InProceedings{zaghen2024nonlinear,
  title = 	 {Sheaf Diffusion Goes Nonlinear: Enhancing {GNNs} with Adaptive Sheaf Laplacians},
  author =       {Zaghen, Olga and Longa, Antonio and Azzolin, Steve and Telyatnikov, Lev and Passerini, Andrea and Li\`o, Pietro},
  booktitle = 	 {Proceedings of the Geometry-grounded Representation Learning and Generative Modeling Workshop (GRaM)},
  pages = 	 {264--276},
  year = 	 {2024},
  editor = 	 {Vadgama, Sharvaree and Bekkers, Erik and Pouplin, Alison and Kaba, Sekou-Oumar and Walters, Robin and Lawrence, Hannah and Emerson, Tegan and Kvinge, Henry and Tomczak, Jakub and Jegelka, Stephanie},
  volume = 	 {251},
  series = 	 {Proceedings of Machine Learning Research},
  month = 	 {29 Jul},
  publisher =    {PMLR},
  url = 	 {https://proceedings.mlr.press/v251/zaghen24a.html},
}

@INPROCEEDINGS{dinino2025restriction,
  author={Di Nino, Leonardo and Barbarossa, Sergio and Di Lorenzo, Paolo},
  booktitle={2024 58th Asilomar Conference on Signals, Systems, and Computers}, 
  title={Learning Sheaf Laplacian Optimizing Restriction Maps}, 
  year={2024},
  volume={},
  number={},
  pages={59--63},
  doi={10.1109/IEEECONF60004.2024.10942997}
  }
\end{small}

\end{document}